\newtheorem{thm}{Theorem}[section]
\pgfplotsset{compat=1.17}
\newcommand{\ignore}[1]{}
  \providecommand\BibTeX{{%
    \normalfont B\kern-0.5em{\scshape i\kern-0.25em b}\kern-0.8em\TeX}}}
\begin{document}

\title{Evolutionary Multi-Objective Algorithms for the Knapsack Problems with Stochastic Profits}

\date{}

\author{
        {Kokila Perera} \\
        Optimisation and Logistics\\
	School of Computer and Mathematical Sciences\\
        The University of Adelaide\\
        Adelaide, Australia \\
        \And
	{Aneta Neumann} \\
 Optimisation and Logistics\\
	School of Computer and Mathematical Sciences\\
        The University of Adelaide\\
        Adelaide, Australia \\
        \And
	{Frank Neumann} \\
 Optimisation and Logistics\\
	School of Computer and Mathematical Sciences\\
        The University of Adelaide\\
        Adelaide, Australia \\
}

\maketitle
\begin{abstract}

Evolutionary multi-objective algorithms have been widely shown to be successful when utilized for a variety of stochastic combinatorial optimization problems. Chance constrained optimization plays an important role in complex real-world scenarios, as it allows decision makers to take into account the uncertainty of the environment. We consider a version of the knapsack problem with stochastic profits to guarantee a certain level of confidence in the profit of the solutions. We introduce the multi-objective formulations of the profit chance constrained knapsack problem and design three bi-objective fitness evaluation methods that work independently of the specific confidence level required. We evaluate our approaches using well-known multi-objective evolutionary algorithms GSEMO and NSGA-II. In addition, we introduce a filtering method for GSEMO that improves the quality of the final population by periodically removing certain solutions from the interim populations based on their confidence level. We show the effectiveness of our approaches on several benchmarks for both settings where the knapsack items have fixed uniform uncertainties and uncertainties that are positively correlated with the expected profit of an item.
\end{abstract}

\keywords{Multi-objective optimization, stochastic knapsack problem, chance constrained problems}

\section{Introduction}
\label{section:introduction}
Real world optimization problems involve often uncertain properties imposed by some stochastic components in the problem or noise in its environment \cite{peng:tel-02303045, He_Shao_ICIECS_2009}. Such uncertainties must be considered in the optimization to find reliable and useful solutions to problems. Chance constraints are a natural way to model uncertainties in problems. They can capture the effects of uncertain variables on the inequality constraints and formulate the optimization problems under uncertainties \cite{peng:tel-02303045, Neumann2020,DBLP:conf/aaai/DoerrD0NS20}. 
A chance constraint is usually an inequality constraint on some stochastic variable of the problem, which can be violated by a slight \emph{chance} (a small probability) when optimizing the problem \cite{neumann_sutton_foga_19,DBLP:conf/ppsn/NeumannN20}. Chance constraints enable one to set the desired confidence level of getting the maximal profit when implementing a solution, unlike using a solution from a deterministic approach. The study of problems with chance constraints leads to developing efficient applications to solve complex optimization problems and is crucial in developing real world solutions for complex problems, such as in the mining industry \cite{yue_stockpile_2021, mining_2021}, power systems \cite{GENG2019341}, communication systems \cite{Abe_2020_satcom} and transportation \cite{Kepaptsoglou2015_containership}. 

This work considers a variation of the classical knapsack problem (KP), where the items have stochastic profits and deterministic weights. As the weights of items are deterministic, the constraint on the weights remains the same as in the classical KP. An additional constraint is introduced to this problem to capture the uncertainty of the profit variable, which is a chance constraint on the profit \cite{Aneta_ppsn_22}. This constraint guarantees that the solution has maximal profit P and only drops below P for a small probability ($\alpha$). In summary, the optimization process for this problem is to find the solution(s) that maximize the profit P, subjected to the weight constraint and profit chance constraint, which guarantees that the profit P is obtained with probability at least $\alpha$.

In the literature, the KP with a chance constraint on the weight (as elements have deterministic profits and stochastic weights) is more prevalent \cite{yue_gecco_19,yue_gecco_20}. On the contrary, the profit chance constrained KP is explored only in one study, which appears in recent literature \cite{Aneta_ppsn_22}. This work considers single-objective evolutionary approaches to address this problem \cite{Aneta_ppsn_22}. This scenario makes it possible to identify the risk of not achieving the expected profit by implementing a particular solution for the problem and making better decisions in planning. This problem reflects a beneficial and valid real-world problem scenario like mine planning as mentioned in \cite{Aneta_ppsn_22}.

Evolutionary algorithms (EAs) perform well in addressing stochastic optimization problems \cite{Doer_Neumann_survey_21, DBLP:conf/ppsn/SinghB22}. Also, they can produce optimal or feasible solutions in a reasonable amount of time for complex combinatorial optimization problems like chance constrained knapsack and job shop scheduling problems. EAs may consider a different number of objectives depending on how the definition of the underlying problem. Usually, when an EA considers a single objective, it generates a single solution that optimizes the objective value. In contrast, a multi-objective EA generates a set of solutions that gives a trade-off between given objectives. Such a solution set makes it possible to have more insights into improving the algorithms and search space than having a single solution as the outcome \cite{coello2013evolutionary, kdeb01}. Therefore, multi-objective algorithms help one to make informed decisions on selecting a solution to implement. 

In this work, we explore the use of the multi-objective evolutionary approaches for the chance constrained KP with stochastic profits introduced in \cite{Aneta_ppsn_22}. Here we introduce multi-objective fitness evaluation for EAs to address this problem and methods to enhance their performance.

\subsection{Related Work}
\label{subsec:literature}
The use of evolutionary computation for chance constrained problems appears in the early literature \cite{He_Shao_ICIECS_2009,Loughlin_Ranjithan_gecco99, Liu_EA_for_CC, Masutomi_2013_jaciii}. Those works consider computationally expensive methods like simulations and sampling to cater for chance constraints. More recent studies have looked into tail-bound inequalities, which more efficiently deal with chance constraints \cite{Aneta_ppsn_22, yue_gecco_19, Hirad_ecai_20}. 

The chance constrained KP where the profits are deterministic and weights are stochastic is considered in several papers \cite{yue_gecco_19, yue_gecco_20}. \cite{yue_gecco_19} presents how to use well-known deviation inequalities: Chebyshev's inequality and Hoeffding bound to estimate the probability of constraint violation. In \cite{yue_gecco_20}, where the same KP variation is considered, they introduce problem-specific operators for EAs with both single- and multi-objective formulations. In the study \cite{Hirad_ecai_20}, dynamic chance constrained KP with stochastic profits is studied. In addition to the objective function on the profit of a given stochastic solution, a second objective is introduced to address the dynamic capacity constraint. It captures the minimal capacity bound for the solution that meets the chance constraints.

Run-time analysis is an essential topic in studying problems with chance constraints. The first paper on run time analysis for chance constraint problems considers the KP with stochastic weights \cite{neumann_sutton_foga_19}. This work considers different cases of that problem and studies the run time of (1+1) EA for them. In \cite{yue_runtime_gecco_21}, they perform the run time analysis of simple EAs for chance constrained KPs with uniform weights. The papers \cite{Neumann_ijcai2022} and \cite{CCMakespan_ppsn22} study the run time of simple EAs for different chance constrained problems. In\cite{Neumann_ijcai2022}, the authors consider single- and multi-objective EAs for chance constrained problems with normally distributed random problem variables. They also show how to use the proposed evolutionary approaches for chance constrained minimum spanning tree problems \cite{Neumann_ijcai2022}. In \cite{CCMakespan_ppsn22}, they analyze the run time of random local search and (1+1) EA for the chance constrained makespan problem. 

In the study \cite{Aneta_ppsn_22}, the authors simple EAs for the profit chance constrained KP. Those algorithms include (1+1) EA with standard bit-flip mutation and heavy-tail mutation operators and population based ($\mu$+1) EA, which uses a specific crossover operator for the KP. This study evaluates the performance of all these algorithms using the single objective fitness evaluation. The overall results show that (1+1) EA with heavy tail mutation operator significantly improved over other algorithms.

This work is motivated by the recent study on evolutionary multi-objective algorithms that compute trade-offs with respect to the expected value and the variance of solutions presented in \cite{Neumann_ijcai2022}. Here we explore the multi-objective formulations for the profit chance constrained KP based on that work. In addition to the variance of the solutions, we consider the standard deviation and also the count of elements in the solutions (under certain conditions) to introduce fitness evaluations for the problem. The significance of these fitness functions is that they can evaluate the fitness of a solution independent of any specific confidence level of its profit (i.e. independent of specific value for $\alpha$). Since this generates a set of solutions that gives a trade-off of specific objectives for each fitness formulation with each algorithm, it allows one to make more informed decisions when selecting a solution to implement. For example, to identify the solution that gives the best profit with a particular $\alpha$ value, we can calculate the profit of all the solutions for that value and select the solution that gives the best profit among the final population.

This study considers two well-known multi-objective EAs: GSEMO \cite{Giel2003_gsemo} and NSGA-II \cite{DebAPM02_nsga2}. Furthermore, we introduce a filtering method for GSEMO which aims to improve the quality of the final population resulting from the algorithm. This filtering method is applied regularly after a fixed number of evaluations in the evolutionary process. It considers whether a solution can be the best solution for any $\alpha \in [0.0, 1/2]$ considering the interim population at the time and, otherwise, removes it from the population. For all experimental settings, in addition to the two algorithms, GSEMO and NSGA-II, we consider this filtering method combined with GSEMO as a separate algorithm.

The structure of this paper is as follows. Section~\ref{sec:problem} introduces the problem formulation, and Section \ref{sec:problem} discusses the algorithms we consider in this study. Section \ref{sec:mo-formulation} discusses the multi-objective formulation, including the objective function on profit, fitness functions and how to use the probability bounds to estimate the confidence level of the solutions. Section \ref{sec:algorithms} and \ref{subsec:filtering}  discuss the EAs considered in this paper and the new Filtering method introduced in this work. Section \ref{sec:experiments} presents the details of the experimental settings and the results. Finally, Section \ref{sec:conclusion} gives the conclusions of this study.

\section{Problem Definition}
\label{sec:problem}
In the classical KP, the profit and weight values of the items are deterministic. Let the KP has n items $\{x_1, \ldots, x_i, \ldots, x_n\}$ with profit $p_i$ and weight $w_i$ and weight bound B. A solution to the problem can be represented as $\{0,1\}^n$ such that $x_i=1$ only when $x_i$ is selected in the solution. Given the profit of solution $x$ as $p(x) = \sum_{i=1}^n{p_i.x_i}$ and weight $w(x) = \sum_{i=1}^n{w_i.x_i}$, the classical KP is to find the solution $x^*$ that maximize $p(x^*)$ subjected to the weight constraint $w(x^{*}) \leq B$.

In this work, we consider a stochastic version of the classical KP where the profits of the knapsack items are stochastic while weights remain deterministic. 
Therefore the profit of a solution will be uncertain and may vary from the expected maximal profit value. A chance constraint on the profit is used to capture the stochastic behaviour of the problem. This constraint ensures that for each feasible solution $x$, the maximum probability that the profit will drop below profit value P is only a small probability $0 < \alpha <  1/2$. 

We can formally present this problem as follows:
\begin{eqnarray}
\label{eq:prob_randP_maxP}
    & \max P  \\
    \label{eq:prob_randP_alphaP}
    \mbox {s. t.}   & Pr(p(x) < P) \leq \alpha\\ 
    \label{eq:prob_randP_weightB}
    and  & w(x) \leq B
\end{eqnarray}

where $\alpha$ is a parameter determining the allowed failure probability which is a small value $<=1/2$. Equation \ref{eq:prob_randP_alphaP} specifies the chance constraint on the profit.

\subsection{Estimating Profit of a Solution}
\label{subsec:profit_estimates}
Computing the probability of violating a given chance constraint is intractable in general \cite{Doer_Neumann_survey_21}. Therefore, it is not straightforward to estimate the profit of a solution under uncertainties. However, tail bounds can be used to upper bound the probability of chance constraint violation\cite{Doer_Neumann_survey_21}.  In \cite{Aneta_ppsn_22}, the authors present  profit estimates for the problem based on the tail bounds: Chebyshev's inequality and Hoeffding bound. Each of these applies to the problem under certain conditions.

If the expectation and variance of the solutions' profits are known for a given problem instance, we can use Chebyshev's inequality to calculate the profit of a solution for it \cite{Doerr2020}. \cite{Aneta_ppsn_22} present a profit estimate in Equation \ref{eq:profit_cheby}, that ensure the profit chance constraint $Pr(p(x)<P)\leq \alpha$ is held. We can guarantee that the solution $x$ will give the profit $\hat{p}_{Cheb}(x,\alpha)$ except for a small probability $\alpha$ as follows:
\begin{eqnarray}
    \label{eq:profit_cheby}
    \hat{p}_{Cheb}(x, \alpha) = \mu(x) - \sqrt{(1-\alpha)/ \alpha} \cdot \sqrt{v(x)}.
\end{eqnarray}

The above equation gives a very general setting for the profit of a solution that can be used in any scenario where the expected value and the variance are known. For example, we can consider that the profits $p_i$ take a uniform distribution such that $p_i \in \{\mu_i-\delta,\mu_i+\delta\}$ which gives the expected value $\mu_i$ and variance $\frac{\delta^2}{3}.|x|_1$. 

The Hoeffding bound can be used to formulate a profit estimate if the profits are taken randomly from a uniform random distribution independent of each other \cite{Doerr2020}. From \cite{Aneta_ppsn_22}, we get the formulation for the profit of solution $x$ using the Hoeffding bound ($\hat{p}_{Hoef}(x,\alpha)$) as follows:
\begin{eqnarray}
    \label{eq:profit_cher}
    \hat{p}_{Hoef}(x,\alpha) = \mu(x) - \delta \cdot \sqrt{\ln(1/\alpha) \cdot 2 \cdot|x|_1}.
\end{eqnarray}

\section{Multi-Objective Formulation}
\label{sec:mo-formulation}
In this section, we introduce the multi-objective formulations of the profit chance constrained KP. As presented in Section \ref{sec:problem}, the optimal solution for the KP maximizes the profit subjected to the weight bound constraint. For the profit chance constrained KP with deterministic weights, the multi-objective formulation needs to consider the uncertainty of the profits of knapsack items. In the following subsections, we present the multi-objective formulations and the functions to estimate profits.

\subsection{Fitness Functions}
\label{subsec:fitness-functions}

In general, we need to ensure that the expectation of profit is maximized while the variance is minimized to maximize the profit of the solution. Considering this, we introduce the fitness function $g(x) = (\mu(x), v(x))$ that will produce a set of solutions that gives a trade-off between the two objectives, irrespective of $\alpha$ the guarantee of the profit of the individual solution's profit value. The formula for the two objectives is given in Equation \ref{eq:fitness-1-obj1} and \ref{eq:fitness-1-obj2} where $v_{max} = \sum_{i=1}^n {v_i}$.\\
\begin{equation}
    \label{eq:fitness-1-obj1}
    \mu(x) = \left\{
    \begin{array}{lcl}
        \sum_{i=1}^n {\mu_i x_i} & & { w(x)\leq B}\\
        B - w(x) & & {w(x) > B}
    \end{array} \right.
\end{equation}
\\
\begin{equation}
    \label{eq:fitness-1-obj2}
    v(x) = \left\{
    \begin{array}{lcl}
        \sum_{i=1}^n {\sigma_i^2x_i} & { w(x)\leq B}\\
        v_{max} +(w(x)-B) & {w(x) >B}
    \end{array} \right.
\end{equation}

When evaluating the fitness of solutions, we need to determine their dominance concerning the different objectives. For two solutions A and B, we say that $A$ dominates $B$ (denoted as $A \succeq B $) \textbf{iff} $\mu(A) \geq \mu(B) \land v(A) \leq v(B)$, and $A$ strongly dominates $B$ (denoted as $A\succ B$) \textbf{iff}  $A\succeq B $ and $\mu(A) > \mu(B) \lor v(A) < v(B)$.  For an infeasible solution that violates the weight capacity constraint ($w(A) > B $), the above formula penalises the two objective values (see Equation \ref{eq:fitness-1-obj1} and \ref{eq:fitness-1-obj2}). This formulation ensures that any feasible solution dominates the infeasible solution instances.

Next, we consider the fitness function $g^{\prime}(x) = (\mu(x), s(x))$. Only the second objective of this fitness function differs from $g^{\prime}(x)$ while the first objective $\mu$ remains the same  and can be defined as given in Equation \ref{eq:fitness-1-obj1}. We denote the maximal standard deviation as $s_{max} = \sum_{i=1}^n {\sigma_i}$ in Equation \ref{eq:fitness-1-obj2-sqrt} which defines the second objective of this fitness function $g^{\prime}(x)$.

\begin{equation}
    \label{eq:fitness-1-obj2-sqrt}
    s(x) = \left\{
    \begin{array}{lcl}
        \sqrt{\sum_{i=1}^n {\sigma_i^2x_i}}  & { w(x)\leq B} \\
        s_{max} +(w(x)-B) & {w(x) >B}.
    \end{array} \right.
\end{equation}

For simple EAs like GSEMO, the final population when optimizing $g(x)$ is equivalent to when optimizing $g^{\prime}(x)$ as the difference between the two is that the second objective of the latter is the square root of the second objective of the former. Therefore it does not change the order of search points of such algorithms. However, it should be noted that some popular EMO algorithms like NSGA-II would encounter a difference in the diversity mechanism when working with $v(x)$ instead of $s(x)$. Therefore, we investigate both $g(x)$ and $g^{\prime}(x)$ in this work.

\subsection{Fitness Function for Profits with Same Dispersion}
\label{subsec:fitness-functions-special}

When the value for $\delta$ is the same for all elements, we can consider the number of items selected in the solution ($|x|_1$) as the second objective.
This objective enables the definition of a fitness function that performs independent of both the confidence level of the solution ($\alpha$) and the uncertainty level of the individual profit values ($\delta$). We denote the fitness function for this scenario as $g^{\prime\prime}(x) = (\mu(x),c(x))$. The first objective of the new fitness function is based on the expectation of the profit of $x$ $\mu(x)$, similar to previous functions and calculated as given in Equation \ref{eq:fitness-1-obj1}. The second objective $c(x)$ is calculated as follows,

\begin{equation}
    \label{eq:fitness-2-obj2}
    c(x) = \left\{
    \begin{array}{lcl}
        \sum_{i=1}^n {x_i} & { w(x)\leq B}\\
        n +(w(x)-B) & {w(x) >B}.
    \end{array} \right.
\end{equation}

\subsection{Estimating the Best Profit Value}
\label{subsec:profit_estimates_mo}
As the final output of the multi-objective evolutionary approaches, we get a set of solutions that gives a trade-off of the objectives independent of the confidence level of the profits. We need to identify the solution with the best profit for a given confidence level $\alpha$ using these final solutions. We can use Equations \ref{eq:profit_cheby} and \ref{eq:profit_cher} from \cite{Aneta_ppsn_22} and calculate the profit of all the solutions in the final population for the required confidence levels ($\alpha$). The solution giving the highest profit value for each $\alpha$ is the best solution for that particular setting.
\subsection{Confidence Level of a Solution's Profit}
\label{subsec:alpha-threshold}
Different solutions in the final population produced by multi-objective approaches become the best solution for different confidence levels. We can identify the confidence level of each solution in the final population for which it gives the highest profit value. 
First, we obtain the $\alpha$ threshold for a pair of solutions such that one becomes better than the other by comparing the profit estimates mentioned in Subsection \ref{subsec:profit_estimates}. Based on the threshold $\alpha$ value between solution pairs in the final population, we define a confidence level interval for each solution such that they give the best profit value for any $\alpha$ in that interval.

\subsubsection{Confidence Level Threshold using Chebyshev's Inequality}
Let the solutions $x$ and $y$ such that $\mu(x)>\mu(y)$ and $v(x)>v(y)$, then we can say that the profit of the solution $x$ is better than $y$ when the minimum required confidence level($\alpha_{x,y}$) is as follows:

\begin{align}
    \label{eq:cheby_alpha_threshold}
    \alpha_{Cheb(x,y)}=\frac{1}{1+(R(x,y))^2} & \text{ s.t.} & R(x,y) = \frac{\mu(x)-\mu(y)}{\sqrt{v(x)} - \sqrt{v(y)}}
\end{align}

For the confidence level $\alpha\geq 1/(1+(R(x,y))^2)$, the solution $X$ will have a better profit value than the solution $Y$.\\
\begin{thm}
Let $0 < \alpha <1, and $ $x$ and $y$ be two feasible solutions such that $\mu(x) > \mu(y)$ and $v(x) > v(y)$, holds. If $\alpha\geq \frac{1}{1+(R(x,y))^2}$ holds such that $R(x,y) = \frac{\mu(x)-\mu(y)}{\sqrt{v(x)} - \sqrt{v(y)}}$ then $\hat{p}_{Cheb}(x, \alpha) \geq \hat{p}_{Cheb}(y, \alpha)$.
\end{thm}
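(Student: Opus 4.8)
The plan is to reduce the claimed profit inequality to the stated hypothesis by a chain of equivalent algebraic manipulations, exploiting the fact that every quantity involved is positive so that division and squaring preserve the direction of the inequality.

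First I would expand the claimed inequality $\hat{p}_{Cheb}(x, \alpha) \geq \hat{p}_{Cheb}(y, \alpha)$ using the definition in Equation \ref{eq:profit_cheby}. Writing $c(\alpha) = \sqrt{(1-\alpha)/\alpha}$ for brevity, this reads $\mu(x) - c(\alpha)\sqrt{v(x)} \geq \mu(y) - c(\alpha)\sqrt{v(y)}$, which rearranges to $\mu(x) - \mu(y) \geq c(\alpha)\bigl(\sqrt{v(x)} - \sqrt{v(y)}\bigr)$.

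Next I would record the positivity facts supplied by the assumptions: $v(x) > v(y) \geq 0$ gives $\sqrt{v(x)} - \sqrt{v(y)} > 0$, and $\mu(x) > \mu(y)$ gives $\mu(x) - \mu(y) > 0$, so $R(x,y)$ is well-defined and strictly positive. Dividing the rearranged inequality through by the positive quantity $\sqrt{v(x)} - \sqrt{v(y)}$ turns it into $R(x,y) \geq c(\alpha)$. Since $0 < \alpha < 1$, both $R(x,y)$ and $c(\alpha)$ are nonnegative, so I may square both sides without reversing the inequality to obtain $R(x,y)^2 \geq (1-\alpha)/\alpha = 1/\alpha - 1$. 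Rearranging this last inequality yields $1 + R(x,y)^2 \geq 1/\alpha$, that is, $\alpha \geq 1/\bigl(1 + R(x,y)^2\bigr)$, which is exactly the hypothesis.

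Since every step above is an equivalence under the stated positivity conditions, reading the chain from the hypothesis downward establishes the conclusion. The main thing to get right—and the only place the argument could fail if one is careless—is justifying that each division and squaring step is valid and direction-preserving; this is precisely where the assumptions $\mu(x) > \mu(y)$, $v(x) > v(y)$, and $\alpha \in (0,1)$ are needed. Without them the equivalences break down, for example if $\sqrt{v(x)} = \sqrt{v(y)}$ then $R(x,y)$ is undefined, and if the two differences had opposite signs the division would flip the inequality.
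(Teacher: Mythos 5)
Your proposal is correct and follows essentially the same route as the paper: the paper's proof is likewise a chain of equivalences between $\alpha \geq 1/(1+(R(x,y))^2)$ and $\hat{p}_{Cheb}(x,\alpha) \geq \hat{p}_{Cheb}(y,\alpha)$, justified by the positivity of $R(x,y)$ and $(1-\alpha)/\alpha$; you merely traverse the chain from conclusion to hypothesis and then invoke reversibility, and you are slightly more explicit about why each division and squaring step preserves direction.
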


\begin{proof}
    We have
    \begin{align*}
        & &\alpha & \geq \frac{1}{1+(R(x,y))^2} \\
        & \Longleftrightarrow & \alpha + \alpha (R(x,y))^2 & \geq 1 \\
        & \Longleftrightarrow & (R(x,y))^2 & \geq (1- \alpha)/{\alpha}\\
    \end{align*}
    As we assume $0 < \alpha<1$, $\mu(x) > \mu(y)$ and $v(x) > v(y)$, we have $R(x,y)>0$ and $\frac{1- \alpha}{\alpha}>0$.\\
    This implies,
    \begin{align*}
        && R(x,y) & \geq \sqrt{(1- \alpha)/\alpha}\\
        & \Longleftrightarrow & \frac{\mu(x)-\mu(y)}{\sqrt{v(x)} - \sqrt{v(y)}} & \geq \sqrt{(1- \alpha)/\alpha}\\
        & \Longleftrightarrow & \mu(x)-\mu(y) & \geq \sqrt{\frac{1- \alpha}{\alpha}} \cdot \left( \sqrt{v(x)} - \sqrt{v(y)}\right)\\
        & \Longleftrightarrow & \mu(x)- \sqrt{\frac{1- \alpha}{\alpha}} \cdot \sqrt{v(x)} & \geq \mu(y) -  \sqrt{\frac{1- \alpha}{\alpha}} \cdot \sqrt{v(y)}\\
        & \Longleftrightarrow & \hat{p}_{Cheb}(x, \alpha) & \geq \hat{p}_{Cheb}(y, \alpha)
    \end{align*}
\end{proof}

\subsubsection{Confidence Level Threshold using Hoeffding Bound}
Consider the solutions $x$ and $y$ such that $\mu(x)>\mu(y)$ and $v(x)>v(y)$. From the $\hat{p}_{Hoef}$ we can derive the minimum confidence level $\alpha_{x,y}$ for which solution $x$ gives a better profit than $y$ w.r.t. $\hat{p}_{Hoef}$ as follows:
\begin{align}
    \label{eq:cher_alpha_threshold}
    \alpha_{Hoef(x,y)}=e^{-(S(x,y))^2} & \text{ s.t.} & S(x,y) = \frac{\mu(x) - \mu(y)}{\delta\left(\sqrt{2|x|_1} - \sqrt{2|y|_1}\right)}
\end{align}
\begin{thm}
    \label{theo:chern_alpha_threshold}
    Let~$0 < \alpha <1, and $ $x$ and $y$ be two feasible solutions such that $\mu(x) > \mu(y)$ and $v(x) > v(y)$, holds. If $\alpha\geq e^{-(S(x,y))^2}$ holds then $\hat{p}_{Hoef}(x, \alpha) \geq \hat{p}_{Hoef}(y, \alpha)$.
\end{thm}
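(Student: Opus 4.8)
The plan is to mirror the algebraic chain used in the proof of the preceding (Chebyshev) theorem, simply replacing the factor $\sqrt{(1-\alpha)/\alpha}$ by the Hoeffding factor $\sqrt{\ln(1/\alpha)}$ and the variance-based quantity $R(x,y)$ by the cardinality-based $S(x,y)$. I would start from the hypothesis $\alpha \geq e^{-(S(x,y))^2}$ and apply the strictly increasing natural logarithm to both sides, obtaining $\ln\alpha \geq -(S(x,y))^2$, which rearranges to $(S(x,y))^2 \geq \ln(1/\alpha)$. As in the Chebyshev case, each step here is an equivalence, so the chain can be written with $\Longleftrightarrow$ throughout, although only the forward direction is needed for the claim.

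The next step is to take square roots, which is legitimate only once both sides are known to be non-negative. Since $0 < \alpha < 1$ we immediately have $\ln(1/\alpha) > 0$, so the remaining task is to argue $S(x,y) > 0$. This yields $S(x,y) \geq \sqrt{\ln(1/\alpha)}$. I would then substitute the definition $S(x,y) = \frac{\mu(x)-\mu(y)}{\delta(\sqrt{2|x|_1}-\sqrt{2|y|_1})}$, cross-multiply by the positive denominator $\delta(\sqrt{2|x|_1}-\sqrt{2|y|_1})$, and move all terms into the form $\mu(x) - \delta\sqrt{\ln(1/\alpha)}\sqrt{2|x|_1} \geq \mu(y) - \delta\sqrt{\ln(1/\alpha)}\sqrt{2|y|_1}$. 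Finally, using $\sqrt{\ln(1/\alpha)}\cdot\sqrt{2|x|_1} = \sqrt{\ln(1/\alpha)\cdot 2|x|_1}$, I recognize the two sides as $\hat{p}_{Hoef}(x,\alpha)$ and $\hat{p}_{Hoef}(y,\alpha)$ exactly as in Equation~\ref{eq:profit_cher}, completing the proof.

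The one genuinely non-routine point, and hence the main obstacle, is establishing the sign of the denominator $\sqrt{2|x|_1}-\sqrt{2|y|_1}$, since this controls both the validity of the square-root step ($S(x,y)>0$) and the direction-preserving cross-multiplication. The numerator $\mu(x)-\mu(y)>0$ is given, but the denominator requires $|x|_1 > |y|_1$, whereas the stated hypothesis is $v(x) > v(y)$. The resolution is that the Hoeffding estimate applies precisely in the regime where every item carries the same dispersion $\delta$, so that $v(x) = \tfrac{\delta^2}{3}|x|_1$; in this regime $v(x) > v(y)$ is equivalent to $|x|_1 > |y|_1$, which gives $\sqrt{2|x|_1}-\sqrt{2|y|_1} > 0$ and $\delta > 0$, hence $S(x,y)>0$. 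I would flag this equivalence explicitly, as it is what licenses treating the cardinality gap as positive and keeps the inequality from flipping; the rest of the argument is then a routine rearrangement identical in structure to the Chebyshev proof.
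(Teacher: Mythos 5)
Your proposal is correct and follows essentially the same route as the paper's proof: take logarithms of $\alpha \geq e^{-(S(x,y))^2}$ to obtain $(S(x,y))^2 \geq \ln(1/\alpha)$, take square roots after checking both sides are positive, and rearrange the definition of $S(x,y)$ into the two Hoeffding estimates. Your explicit justification that $v(x) > v(y)$ implies $|x|_1 > |y|_1$ in the equal-dispersion regime (and hence $S(x,y) > 0$) is in fact slightly more careful than the paper, which simply asserts $S(x,y) > 0$ from the stated assumptions without spelling out this step.
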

\begin{proof}
We have,
\begin{align*}
    & & \alpha & \geq e^{-(S(x,y))^2} \\
    &\Longleftrightarrow & e^{(S(x,y))^2} & \geq {1/\alpha} \\
    &\Longleftrightarrow & (S(x,y))^2 & \geq \ln{(1/\alpha)}\\
\end{align*}
As we assume $0<\alpha<1$, $\mu(x)>\mu(y)$ and $v(x)>v(y)$, we have $S(x,y)>0$ and $\ln\frac{1}{\alpha} > 0$.\\
This implies, 
\begin{align*}
    &&S(x,y) & \geq \sqrt{\ln(1/\alpha)} \\
    &\Longleftrightarrow &  \frac{\mu(x) - \mu(y)}{\delta\left(\sqrt{2|x|_1} - \sqrt{2|y|_1}\right)} & \geq 
        \sqrt{\ln(1/\alpha)} \\
    &\Longleftrightarrow &  \mu(x) - \mu(y) & \geq
        \delta \cdot \sqrt{2 \ln \left( \frac{1}{\alpha} \right) } \left( \sqrt{|x|_1} - \sqrt{|y|_1} \right) \\
    &\Longleftrightarrow & \mu(x) - \delta \cdot \sqrt{\ln\left(\frac{1}{\alpha}\right)\cdot2\cdot|x|_1} & \geq 
        \mu(y) - \delta \cdot \sqrt{\ln\left(\frac{1}{\alpha}\right)\cdot2\cdot|y|_1}\\
    &\Longleftrightarrow & \hat{p}_{Hoef}(x, \alpha) & \geq 
        \hat{p}_{Hoef}(y, \alpha)
\end{align*}
\end{proof}

\subsubsection{Confidence Level Interval for Solutions in the Final Population}
We can use the $\alpha$ threshold value in the previous subsections, to introduce the confidence level range for solutions in a population as follows. As the $\alpha$ threshold we can use either $\alpha_{Cheb(x,y)}$ or $\alpha_{Hoef(x,y)}$. Despite the specific equation to estimate the $\alpha$ threshold, we can introduce the $\alpha$ interval for a solution. 

First, we sort all the solutions in the given population P as $\{x_1, ... x_{n}\}$ such that $\mu(x_1)\geq \mu(x_2) \geq ... \geq \mu(x_{n})$. Then, we can define a $(n+1)\times (n+1)$ symmetric matrix of confidence level thresholds as:

\begin{eqnarray}
    \label{eq:alpha-matrix}
    \alpha_{(i,j)} =
    \left\{
        \begin{array}{lcl}
            1 & i = 0 \text{ or }j = 0\\
            \alpha_{Cheb(i,j)} \text{ or } \alpha_{Hoef(i,j)} & i=1,...,n
        \end{array} 
    \right.
\end{eqnarray}

Finally, using the confidence level threshold values, we can get the confidence level range for solution k as given in Equation \ref{eq:alpha-range}.
If there exists a valid $\alpha$ interval for a particular solution, then for the $\alpha$ values in that interval, that solution will give the best profit value. If the $\alpha$ interval is empty for a particular solution, then that solution won't be the best solution for any $\alpha$ value.
\begin{equation}
    \label{eq:alpha-range}
    \min_{j=0}^{k-1} \alpha_{i,k} \leq \alpha_k \leq \max_{j=k}^{n-1} \alpha_{i,j}
\end{equation}
\section{Algorithms}
\label{sec:algorithms}
\begin{algorithm}[!t]
\caption{GSEMO}
    \begin{algorithmic}[1]
        \STATE Choose $x \in \{0,1\}^n $ uniformly at random ;
        \STATE $S\leftarrow \{x\}$;
        \WHILE{stopping criterion not met}
            \STATE choose $x\in S$ uniformly at random;
            \STATE $y\leftarrow$ flip each bit of $x$ independently with probability of $\frac{1}{n}$;
            \IF{($\not\exists w \in S: w \succ y$)}
                \STATE $S \leftarrow (S \cup \{y\})\backslash \{z\in S \mid y \succeq z\}$ ;
            \ENDIF
        \ENDWHILE
    \end{algorithmic}
\label{alg:GSEMO}
\end{algorithm}

In this study, we consider two widely used multi-objective EAs: GSEMO and NSGA-II. GSEMO is the most basic EA that addresses multi-objective optimization problems. It has been proven to be effective in solving chance constrained multi-objective optimization problems in many studies \cite{Aneta_ppsn_22}. The steps of GSEMO are given in Algorithm \ref{alg:GSEMO}.

The population S initially contains a solution that is generated randomly. Over the iterations, a parent solution $x$ is selected uniformly at random from S and an offspring solution $y$ is generated by flipping each bit of $x$ with a probability of $\frac{1}{n}$. If $y$ is not dominated by any of the existing solutions in S, it is added to S replacing all the existing solutions in S that are dominated by y. This guarantees at the end of any iteration, population S will contain a set of non-dominating solutions that are equally efficient with respect to the given objective functions. 

NSGA-II is the most prominent multi-objective EA that focuses on diversity by finding near optimal solutions \cite{DebAPM02_nsga2}. If we consider one iteration of the evolutionary process of NSGA-II, it creates an offspring population from the parent population. First, we use the binary tournament for the selection of two parent solutions and apply single point crossover to generate two offspring solutions which are mutated by flipping each bit with a probability of $\frac{1}{n}$. When the offspring population is full, all the solutions from parent and offspring solutions are considered together and divided into non-dominating solution fronts (i.e., the solutions in a front do not dominate other solutions in it). Starting with the front with the lowest rank, these are added to the new population until it reaches capacity. If only a part of a front is to be taken into the new population, crowding distance-based sorting is used to decide the selection. This improves the better spread (diversity) among the solutions. In one iteration of this algorithm, a new fitness evaluation equal to the size of the offspring population is considered. The number of iterations of the algorithms depends on the offspring population size and the maximum number of evaluations as considered in the experimental setup.
\section{Filtering for Problems with Chance Constraints}
\label{subsec:filtering}
\begin{figure}[t]
    \begin{center}
       \scalebox{.8}{\input{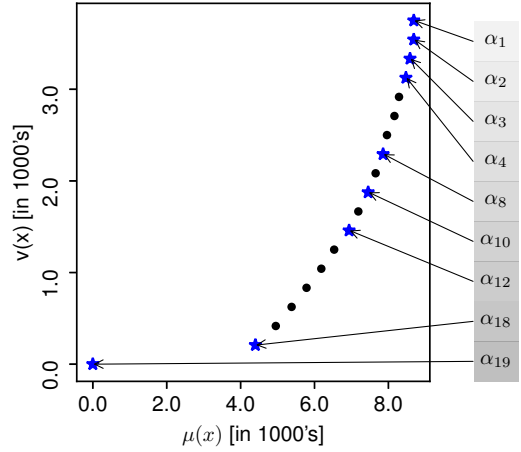}}
    \end{center}
    \caption{A sample of a final population from GSEMO}
    \label{fig:plot1}
\end{figure}
Here we propose a filtering method for GSEMO to improve its outcome by removing solutions that do not have a valid $\alpha$ interval from the interim populations. The final population of GSEMO contains solutions that do not give the best profit value for any probability value for $\alpha$. Such solutions do not add value to the optimization goal of finding the best solutions with given confidence levels. For instance, Figure \ref{fig:plot1} presents a final population from GSEMO using 10 million fitness evaluation of $g(x)$ on uncorr-100: an instance used in the experiments. The plot shows $\mu(x)$ versus $v(x)$ for all solutions in the final population. Blue star markers indicate that the solution has a valid confidence level interval. These solutions' $\alpha$ intervals compose the complete probability range [0.0,1.0]. At the end of the optimization process, interest is in solutions with these blue markers. It is one of these solutions that become the best solution for a given $\alpha$ value. On the contrary, other solutions marked in black do not become the best solution for any confidence level.

The filtering method removes these solutions that do not become the best solution for any confidence level. It is applied to the interim populations of GSEMO regularly after a certain amount of fitness evaluations. This process removes the solutions from interim populations, considering the $\alpha$ intervals they represent according to Equation \ref{eq:alpha-range}. As the filtering method keeps only the solutions with valid $\alpha$ interval, it increases the change of the evolutionary optimization to improve on these solutions. Therefore, this method helps to improve the quality of the solutions in the final population.

The steps of the filtering method are given in Algorithm \ref{alg:filtering-2}. It takes the population $P_0$ as the input, which can be either the final population or an interim population created during the execution of GSEMO. Population $P_0$ needs solutions in the decreasing order of the $\mu(x)$.
For each solution $x^k$, we consider $\alpha_{i,k}$ and select the interval for confidence level, using the inequality given in Equation \ref{eq:alpha-range}. 
The solution $x^k$ is added to the resulting population $P_1$ iff the interval for $\alpha_k$ is non-empty. 
\begin{algorithm}[t]
    \caption{Filtering Method}
    \label{alg:filtering-2}
    \begin{algorithmic}[1]
        \item Input: Population $P_0 = {x^1, ..., x^n}$ ordered by the decreasing order of $\mu(x^i)$;
        \STATE set $k \leftarrow 1$;
        \WHILE{$k \leq n$}
            \STATE set $upper \leftarrow \min_{j=0}^{k-1} \alpha_{i,k}$
            \STATE set $lower \leftarrow \max_{j=k}^{n-1} \alpha_{i,j}$
            \IF{$upper \geq lower $}
                \STATE $P_1 \cup \{x^k\}$
            \ENDIF
            \STATE set $k \leftarrow k + 1$
        \ENDWHILE
        \RETURN $P_1$
    \end{algorithmic}
\end{algorithm}

\section{Experiments}
\label{sec:experiments}


In this work, we evaluate the fitness functions introduced previously in different chance constraint settings using multi-objective EAs on different benchmarks; and we use multiple experiments for that. Here we present the experimental settings and the results of these experiments.

\begin{landscape}
\begin{table*}[t]
    
    \tiny
    \setlength{\tabcolsep}{2pt}
    \centering
    \caption{Results for same dispersion using Chebyshev's inequality}
    \label{tab:cheb-same-delta}
    \begin{tabular}{l|l|r|l|r r c|r r c|r r c|r r c|r r c}
    \toprule
       ~ & \multicolumn{1}{c|}{\multirow{2}{*}{B}}  &\multicolumn{1}{c|}{\multirow{2}{*}{$\alpha$}} & \multicolumn{1}{c|}{\multirow{2}{*}{$\delta$}} &
        \multicolumn{3}{c|}{GSEMO with g(x) (1)} & \multicolumn{3}{c|}{GSEMO+Filtering with g(x) (2)}  & 
        \multicolumn{3}{c|}{NSGA-II with g(x) (3)}& 
        \multicolumn{3}{c|}{NSGA-II with g$^\prime$(x) (4)}& 
        \multicolumn{3}{c}{NSGA-II with g$^{\prime\prime}$(x) (5)}\\
        ~ & ~ & ~ & ~ & \multicolumn{1}{c}{mean} & \multicolumn{1}{c}{std} & \multicolumn{1}{c|}{stat} & 
        \multicolumn{1}{c}{mean} & \multicolumn{1}{c}{std} & \multicolumn{1}{c|}{stat} & 
        \multicolumn{1}{c}{mean} & \multicolumn{1}{c}{std} & \multicolumn{1}{c|}{stat} & 
        \multicolumn{1}{c}{mean} & \multicolumn{1}{c}{std} & \multicolumn{1}{c|}{stat} & 
        \multicolumn{1}{c}{mean} & \multicolumn{1}{c}{std} & \multicolumn{1}{c}{stat} \\ \hline
        
\multirow{6}{*}{\rotatebox[origin=c]{90}{uncorr-100}}  & \multirow{6}{*}{\rotatebox[origin=c]{90}{2407}} & \multirow{2}{*}{0.1} & 25 &	 11029.6826 &	 76.3191 &	 $2^{-}   5^{+}$ &	\textbf{11085.6072} &	 0.0000 &	 $1^{+}  3^{+} 4^{+} 5^{+}$ &	 11007.6251 &	 88.8419 &	 $2^{-}$ &	 10998.7574 &	 50.6310 &	 $2^{-}$ &	 11007.7624 &	 52.5740 &	 $1^{-} 2^{-}$ \\	
&  &  & 50 &	 10862.4750 &	 61.4080 &	 $2^{-}   5^{+}$ &	\textbf{10907.0000} &	 0.0000 &	 $1^{+}  3^{+} 4^{+} 5^{+}$ &	 10837.2841 &	 53.1386 &	 $2^{-}$ &	 10811.4832 &	 80.3604 &	 $2^{-}$ &	 10832.9581 &	 50.1309 &	 $1^{-} 2^{-}$ \\	
&  & \multirow{2}{*}{0.01}  & 25 &	 10620.5264 &	 71.0446 &	 $2^{-}   5^{+}$ &	\textbf{10672.3379} &	 0.0000 &	 $1^{+}  3^{+} 4^{+} 5^{+}$ &	 10602.1969 &	 81.4896 &	 $2^{-}$ &	 10594.9482 &	 45.5447 &	 $2^{-}$ &	 10602.8065 &	 46.9209 &	 $1^{-} 2^{-}$ \\	
&  &  & 50 &	 10044.4941 &	 49.7668 &	 $2^{-}   5^{+}$ &	\textbf{10079.9649} &	 0.0000 &	 $1^{+}  3^{+} 4^{+} 5^{+}$ &	 10025.8881 &	 41.6418 &	 $2^{-}$ &	 10004.2550 &	 67.4364 &	 $2^{-}$ &	 10023.0463 &	 38.8802 &	 $1^{-} 2^{-}$ \\	
&  & \multirow{2}{*}{0.001}  &  25 &	 9345.9245 &	 55.2560 &	 $2^{-}   5^{+}$ &	\textbf{9384.5150} &	 0.0000 &	 $1^{+}  3^{+} 4^{+} 5^{+}$ &	 9338.8078 &	 59.0630 &	 $2^{-}   5^{+}$ &	 9336.6045 &	 30.1889 &	 $2^{-}$ &	 9340.8894 &	 29.5604 &	 $1^{-} 2^{-} 3^{-}$ \\	
&  &  & 50 &	 7495.5153 &	 17.9630 &	 $2^{-}   5^{+}$ &	\textbf{7502.7716} &	 0.0000 &	 $1^{+}  3^{+} 4^{+} 5^{+}$ &	 7498.7722 &	 13.1282 &	 $2^{-}   5^{+}$ &	 7488.8368 &	 31.6085 &	 $2^{-}$ &	 7499.2121 &	 9.5267 &	 $1^{-} 2^{-} 3^{-}$ \\	\hline

\multirow{6}{*}{\rotatebox[origin=c]{90}{strong-100}}  & \multirow{6}{*}{\rotatebox[origin=c]{90}{4187}} & \multirow{2}{*}{0.1} & 25 &	 8507.0584 &	 130.1420 &	 $2^{-}$ &	\textbf{8606.3413} &	 87.2952 &	 $1^{+}  3^{+} 4^{+} 5^{+}$ &	 8525.5561 &	 125.6131 &	 $2^{-}$ &	 8500.0805 &	 149.4958 &	 $2^{-}$ &	 8499.3632 &	 124.5506 &	 $2^{-}$ \\	
&  &  & 50 &	 8368.0306 &	 94.0118 &	 $2^{-}$ &	\textbf{8422.6322} &	 67.2273 &	 $1^{+}  3^{+}  5^{+}$ &	 8326.9385 &	 114.3897 &	 $2^{-}$ &	 8364.0549 &	 124.7364 &	 ~  &	 8319.3263 &	 115.3753 &	 $2^{-}$ \\	
&  & \multirow{2}{*}{0.01}  & 25 &	 8083.9678 &	 106.1983 &	 $2^{-}$ &	\textbf{8170.3360} &	 69.5792 &	 $1^{+}   4^{+} 5^{+}$ &	 8102.4388 &	 104.2026 &	 ~  &	 8082.3479 &	 124.2002 &	 $2^{-}$ &	 8082.2853 &	 103.3765 &	 $2^{-}$ \\	
&  &  & 50 &	 7502.9361 &	 59.8544 &	 $2^{-}$ &	\textbf{7549.4937} &	 41.2749 &	 $1^{+}  3^{+}  5^{+}$ &	 7489.7223 &	 73.5765 &	 $2^{-}$ &	 7513.5174 &	 78.6964 &	 ~  &	 7485.1707 &	 73.8490 &	 $2^{-}$ \\	
&  & \multirow{2}{*}{0.001}  &  25 &	 6770.3193 &	 41.1347 &	 $2^{-}$ &	\textbf{6814.1197} &	 20.7669 &	 $1^{+}   4^{+} 5^{+}$ &	 6787.5739 &	 42.8885 &	 ~  &	 6782.2797 &	 48.8989 &	 $2^{-}$ &	 6784.1694 &	 42.0052 &	 $2^{-}$ \\	
&  &  & 50 &	 4957.5449 &	 36.7770 &	 $2^{+} 3^{-} 4^{-}$ &	 4894.0039 &	 60.2107 &	 $1^{-}  3^{-} 4^{-} 5^{-}$ &	 4990.7637 &	 17.9608 &	 $1^{+} 2^{+}   5^{+}$ &	 4989.4145 &	 13.2248 &	 $1^{+} 2^{+}   5^{+}$ &	\textbf{4991.4701} &	 11.0454 &	 $2^{+} 3^{-} 4^{-}$ \\	\hline

\multirow{6}{*}{\rotatebox[origin=c]{90}{uncorr-300}}  & \multirow{6}{*}{\rotatebox[origin=c]{90}{6853}} & \multirow{2}{*}{0.1} & 25 &	 33935.4067 &	 205.6247 &	 $2^{-}$ &	\textbf{34286.1802} &	 147.5309 &	 $1^{+}  3^{+} 4^{+} 5^{+}$ &	 33681.6883 &	 534.3217 &	 $2^{-}$ &	 33671.7620 &	 555.4450 &	 $2^{-}$ &	 33615.8492 &	 489.0223 &	 $2^{-}$ \\	
&  &  & 50 &	 33571.9980 &	 260.8593 &	 $2^{-}$ &	\textbf{33967.3813} &	 159.2433 &	 $1^{+}  3^{+} 4^{+} 5^{+}$ &	 33418.4882 &	 512.5693 &	 $2^{-}$ &	 33284.5989 &	 450.6255 &	 $2^{-}$ &	 33319.4651 &	 483.5459 &	 $2^{-}$ \\	
&  & \multirow{2}{*}{0.01}  & 25 &	 33237.8865 &	 200.4641 &	 $2^{-}$ &	\textbf{33577.9421} &	 141.7536 &	 $1^{+}  3^{+} 4^{+} 5^{+}$ &	 32992.6756 &	 521.4969 &	 $2^{-}$ &	 32984.2543 &	 541.8390 &	 $2^{-}$ &	 32929.2384 &	 476.3437 &	 $2^{-}$ \\	
&  &  & 50 &	 32180.0106 &	 245.8773 &	 $2^{-}$ &	\textbf{32551.5342} &	 144.8963 &	 $1^{+}  3^{+} 4^{+} 5^{+}$ &	 32039.6829 &	 487.7705 &	 $2^{-}$ &	 31913.5405 &	 429.4896 &	 $2^{-}$ &	 31946.2435 &	 458.2449 &	 $2^{-}$ \\	
&  & \multirow{2}{*}{0.001}  &  25 &	 31066.6084 &	 186.3571 &	 $2^{-}$ &	\textbf{31372.5619} &	 122.8614 &	 $1^{+}  3^{+} 4^{+} 5^{+}$ &	 30846.1243 &	 482.1829 &	 $2^{-}$ &	 30841.9917 &	 499.6822 &	 $2^{-}$ &	 30789.6329 &	 437.0386 &	 $2^{-}$ \\	
&  &  & 50 &	 27843.2948 &	 203.7335 &	 $2^{-}$ &	\textbf{28141.7188} &	 105.9385 &	 $1^{+}  3^{+} 4^{+} 5^{+}$ &	 27745.1612 &	 411.5637 &	 $2^{-}$ &	 27641.0702 &	 364.7302 &	 $2^{-}$ &	 27668.0494 &	 380.2576 &	 $2^{-}$ \\	\hline

\multirow{6}{*}{\rotatebox[origin=c]{90}{strong-300}}  & \multirow{6}{*}{\rotatebox[origin=c]{90}{13821}} & \multirow{2}{*}{0.1} & 25 &	 23809.6581 &	 433.2506 &	 $2^{-} 3^{-}  5^{-}$ &	\textbf{24369.6211} &	 216.9574 &	 $1^{+}  3^{+} 4^{+}$ &	 24099.5570 &	 327.0870 &	 $1^{+} 2^{-}$ &	 23986.4018 &	 344.2409 &	 $2^{-}$ &	 24176.0891 &	 232.7994 &	 $1^{+}$ \\	
&  &  & 50 &	 23594.2993 &	 335.6481 &	 $2^{-} 3^{-}  5^{-}$ &	\textbf{24135.2769} &	 220.4491 &	 $1^{+}  3^{+} 4^{+} 5^{+}$ &	 23867.9086 &	 293.6342 &	 $1^{+} 2^{-}$ &	 23695.2127 &	 304.2994 &	 $2^{-}$ &	 23899.3116 &	 223.8869 &	 $1^{+} 2^{-}$ \\	
&  & \multirow{2}{*}{0.01}  & 25 &	 23176.9548 &	 406.2664 &	 $2^{-} 3^{-}  5^{-}$ &	\textbf{23703.0401} &	 197.3436 &	 $1^{+}  3^{+} 4^{+}$ &	 23464.5667 &	 299.1984 &	 $1^{+} 2^{-}$ &	 23360.1013 &	 318.6357 &	 $2^{-}$ &	 23534.8995 &	 212.1891 &	 $1^{+}$ \\	
&  &  & 50 &	 22322.7651 &	 282.0126 &	 $2^{-} 3^{-}  5^{-}$ &	\textbf{22797.4912} &	 177.4246 &	 $1^{+}  3^{+} 4^{+} 5^{+}$ &	 22588.5433 &	 240.6771 &	 $1^{+} 2^{-}$ &	 22446.1950 &	 255.5942 &	 $2^{-}$ &	 22616.9323 &	 182.9564 &	 $1^{+} 2^{-}$ \\	
&  & \multirow{2}{*}{0.001}  &  25 &	 21208.9163 &	 322.7908 &	 $2^{-} 3^{-}  5^{-}$ &	\textbf{21626.7053} &	 138.6421 &	 $1^{+}   4^{+}$ &	 21486.9475 &	 214.2326 &	 $1^{+}$ &	 21411.6675 &	 240.6127 &	 $2^{-}$ &	 21536.8345 &	 149.2069 &	 $1^{+}$ \\	
&  &  & 50 &	 18388.5805 &	 159.4458 &	 $2^{-} 3^{-} 4^{-} 5^{-}$ &	\textbf{18647.0894} &	 70.1852 &	 $1^{+}   4^{+}$ &	 18623.6627 &	 98.1567 &	 $1^{+}$ &	 18569.9473 &	 110.1685 &	 $1^{+} 2^{-}$ &	 18638.7270 &	 63.3870 &	 $1^{+}$ \\	\hline

\multirow{6}{*}{\rotatebox[origin=c]{90}{uncorr-500}}  & \multirow{6}{*}{\rotatebox[origin=c]{90}{11243}} & \multirow{2}{*}{0.1} & 25 &	 57076.8361 &	 748.9305 &	 $2^{-} 3^{+} 4^{+}$ &	\textbf{58431.4168} &	 311.5788 &	 $1^{+}  3^{+} 4^{+} 5^{+}$ &	 55850.4588 &	 1249.6235 &	 $1^{-} 2^{-}$ &	 55869.0104 &	 1408.3737 &	 $1^{-} 2^{-}$ &	 56037.4884 &	 1287.2936 &	 $2^{-}$ \\	
&  &  & 50 &	 56690.8982 &	 859.2445 &	 $2^{-} 3^{+} 4^{+} 5^{+}$ &	\textbf{58120.8249} &	 314.3063 &	 $1^{+}  3^{+} 4^{+} 5^{+}$ &	 55563.0878 &	 1044.9051 &	 $1^{-} 2^{-}$ &	 54981.1206 &	 1223.7431 &	 $1^{-} 2^{-}$ &	 55667.0434 &	 1278.2837 &	 $1^{-} 2^{-}$ \\	
&  & \multirow{2}{*}{0.01}  & 25 &	 56197.0249 &	 738.3354 &	 $2^{-} 3^{+} 4^{+}$ &	\textbf{57528.4355} &	 304.4306 &	 $1^{+}  3^{+} 4^{+} 5^{+}$ &	 54995.4516 &	 1230.9769 &	 $1^{-} 2^{-}$ &	 55013.2733 &	 1387.3025 &	 $1^{-} 2^{-}$ &	 55179.3063 &	 1266.4267 &	 $2^{-}$ \\	
&  &  & 50 &	 54931.1821 &	 829.5866 &	 $2^{-} 3^{+} 4^{+} 5^{+}$ &	\textbf{56312.8274} &	 298.8610 &	 $1^{+}  3^{+} 4^{+} 5^{+}$ &	 53851.9927 &	 1009.2563 &	 $1^{-} 2^{-}$ &	 53287.7928 &	 1187.2402 &	 $1^{-} 2^{-}$ &	 53950.6793 &	 1236.5882 &	 $1^{-} 2^{-}$ \\	
&  & \multirow{2}{*}{0.001}  &  25 &	 53456.0312 &	 705.5039 &	 $2^{-} 3^{+} 4^{+}$ &	\textbf{54715.2628} &	 282.5314 &	 $1^{+}  3^{+} 4^{+} 5^{+}$ &	 52331.0919 &	 1172.9610 &	 $1^{-} 2^{-}$ &	 52346.6392 &	 1321.7096 &	 $1^{-} 2^{-}$ &	 52505.0533 &	 1201.5375 &	 $2^{-}$ \\	
&  &  & 50 &	 49451.2762 &	 737.5644 &	 $2^{-} 3^{+} 4^{+}$ &	\textbf{50683.8705} &	 252.3483 &	 $1^{+}  3^{+} 4^{+} 5^{+}$ &	 48521.4630 &	 900.6466 &	 $1^{-} 2^{-}$ &	 48012.1505 &	 1072.2455 &	 $1^{-} 2^{-}$ &	 48602.1733 &	 1107.2660 &	 $2^{-}$ \\	\hline

\multirow{6}{*}{\rotatebox[origin=c]{90}{strong-500}}  & \multirow{6}{*}{\rotatebox[origin=c]{90}{22223}} & \multirow{2}{*}{0.1} & 25 &	 38822.1695 &	 692.1198 &	 $2^{-} 3^{-} 4^{-} 5^{-}$ &	\textbf{40391.0362} &	 449.8195 &	 $1^{+}  3^{+} 4^{+} 5^{+}$ &	 39792.1607 &	 415.5621 &	 $1^{+} 2^{-}$ &	 39754.0904 &	 424.9780 &	 $1^{+} 2^{-}$ &	 39769.7011 &	 392.0549 &	 $1^{+} 2^{-}$ \\	
&  &  & 50 &	 38444.0651 &	 620.4975 &	 $2^{-} 3^{-} 4^{-} 5^{-}$ &	\textbf{40078.0983} &	 348.7030 &	 $1^{+}  3^{+} 4^{+} 5^{+}$ &	 39442.9758 &	 605.5909 &	 $1^{+} 2^{-}$ &	 39485.7055 &	 483.2892 &	 $1^{+} 2^{-}$ &	 39416.6356 &	 382.3801 &	 $1^{+} 2^{-}$ \\	
&  & \multirow{2}{*}{0.01}  & 25 &	 38026.4154 &	 657.3621 &	 $2^{-} 3^{-} 4^{-} 5^{-}$ &	\textbf{39525.2027} &	 425.9579 &	 $1^{+}  3^{+} 4^{+} 5^{+}$ &	 38973.8435 &	 390.8087 &	 $1^{+} 2^{-}$ &	 38936.9771 &	 399.8035 &	 $1^{+} 2^{-}$ &	 38951.8432 &	 369.6260 &	 $1^{+} 2^{-}$ \\	
&  &  & 50 &	 36864.1232 &	 555.3952 &	 $2^{-} 3^{-} 4^{-} 5^{-}$ &	\textbf{38332.2087} &	 312.3534 &	 $1^{+}  3^{+} 4^{+} 5^{+}$ &	 37800.5870 &	 535.4997 &	 $1^{+} 2^{-}$ &	 37839.6026 &	 424.3667 &	 $1^{+} 2^{-}$ &	 37781.3920 &	 337.2743 &	 $1^{+} 2^{-}$ \\	
&  & \multirow{2}{*}{0.001}  &  25 &	 35546.6995 &	 551.6446 &	 $2^{-} 3^{-} 4^{-} 5^{-}$ &	\textbf{36827.5418} &	 352.7810 &	 $1^{+}  3^{+} 4^{+} 5^{+}$ &	 36424.1740 &	 315.3111 &	 $1^{+} 2^{-}$ &	 36391.1536 &	 322.1788 &	 $1^{+} 2^{-}$ &	 36404.2919 &	 299.3640 &	 $1^{+} 2^{-}$ \\	
&  &  & 50 &	 31947.2385 &	 360.7027 &	 $2^{-} 3^{-} 4^{-} 5^{-}$ &	\textbf{32899.2694} &	 206.7434 &	 $1^{+}$ &	 32685.3625 &	 321.9032 &	 $1^{+}$ &	 32727.4059 &	 242.3192 &	 $1^{+}$ &	 32688.8779 &	 201.6241 &	 $1^{+}$ \\	\bottomrule
\end{tabular}
\end{table*}

\end{landscape}
\subsection{Experimental Setup}
\label{subsec:experimental-setup}
For experimental investigations, we use the same benchmarks as the ones that are used in \cite{Aneta_ppsn_22}. The set of benchmarks includes three correlated instances and three bounded strongly correlated instances with the numbers of knapsack items $n \in \{100,300,500\}$. We consider that the profits of the knapsack items have a uniform distribution, such that the profit of element $i$ is chosen uniformly at random as $p_i \in \{\mu_i-\delta,\mu_i+\delta\}$. This allows to use both $\hat{p}_{Cheb}$ and $\hat{p}_{Hoef}$ when the profits have the same uncertainty level ($\delta$). The experimental investigation covers two uncertainty levels for each benchmark, $\delta \in \{25,50\}$.
Additionally, we consider the scenario where the profits have different dispersion such that each item $i$ has an uncertainty level $\delta_i$, which is chosen uniformly at random as $\delta_i \in [0.0, \mu_i]$. The benchmarks with different uncertainties are considered only with $\hat{p}_{Cheb}$ since $\hat{p}_{Hoef}$ requires the same uncertainty level for all elements.

This study mainly considers three algorithms: two well-known multi-objective evolutionary algorithms GSEMO and NSGA-II and the third is GSEMO with the filtering method introduced in Section \ref{sec:algorithms}. The latter is referred to as GSEMO+Filtering hereafter in this paper. These algorithms are combined with fitness functions as appropriate, to use in the experiments. For the benchmarks with fixed uncertainties, using GSEMO or GSEMO+Filtering with any fitness function ($g(x)$ or $g^{\prime}(x)$ or $g^{\prime\prime}(x)$) will produce equivalent results as the final populations. Therefore, GSEMO and GSEMO+Filtering are only considered with the fitness evaluation $g(x)$. The fitness function $g^{\prime\prime}(x)$ considers the number of items selected in the solution for the scenario where profits have the same dispersion. Therefore, we do not consider algorithms with that fitness function for the benchmarks with different uncertainties for profits. Only GSEMO and GSEMO+Filtering with $g(x)$, and NSGA-II with $g(x)$ and $g^{\prime}(x)$ are considered for those benchmarks. 

Every algorithm considers the 10 million fitness evaluation and produces a population of solutions that gives a trade-off concerning the objectives used for fitness evaluation. The quality of the output of these methods is evaluated by analyzing the best profit value for different confidence levels considering $\alpha \in \{0.1, 0.01, 0.001\}$. Each method generates a final population independent of $\alpha$, and we select the best solution from that population for different $\alpha$ values using profit estimations $\hat{p}_{Cheb}$ or $\hat{p}_{Hoef}$ as applicable. The results summarise the best profit value given by 30 experimental results for each $\alpha$. This summary requires running the method 30 times for each $\delta$ for benchmarks with the same profit dispersion and 30 times for benchmarks with different profit dispersion. However, when using $g^{\prime\prime}(x)$, as algorithms run independent of $\delta$ value, it is possible to get the best profit values for different $\delta$ from the same final population. 

Finally, we test for the statistical significance validity of the results using the Kruskal-Wallis test with 95\% confidence with the Bonferroni post-hoc statistical procedure. The statistical comparison is indicated as $X^{+}$ or $X^{-}$ to indicate that the method in the column outperforms $X$ or vice versa. If there is no significant difference between the two methods, respective numbers do not appear. For each method, the summary of the best profit values is given as mean, std and stat, which represent the mean and standard deviation of the results and statistical comparison with the corresponding results from other methods, respectively.
\subsection{Results}
\label{subsec:results}
\begin{table*}[!t]
    \centering
    \tiny
    \setlength{\tabcolsep}{2pt}
    \caption{Results for same dispersion using Hoeffding bound}
    \label{tab:cher-same-delta}
    \begin{tabular}{l|r|r|l|r r c|r r c|r r c|r r c|r r c}
    \toprule
       ~ & \multicolumn{1}{c|}{\multirow{2}{*}{B}}  &\multicolumn{1}{c|}{\multirow{2}{*}{$\alpha$}} & \multicolumn{1}{c|}{\multirow{2}{*}{$\delta$}} &
        \multicolumn{3}{c|}{GSEMO with g(x) (1)} & \multicolumn{3}{c|}{GSEMO+Filtering with g(x) (2)}  & 
        \multicolumn{3}{c|}{NSGA-II with g(x) (3)}& 
        \multicolumn{3}{c|}{NSGA-II with g$^\prime$(x) (4)}& 
        \multicolumn{3}{c}{NSGA-II with g$^{\prime\prime}$(x) (5)}\\
        ~ & ~ & ~ & ~ & \multicolumn{1}{c}{mean} & \multicolumn{1}{c}{std} & \multicolumn{1}{c|}{stat} & 
        \multicolumn{1}{c}{mean} & \multicolumn{1}{c}{std} & \multicolumn{1}{c|}{stat} & 
        \multicolumn{1}{c}{mean} & \multicolumn{1}{c}{std} & \multicolumn{1}{c|}{stat} & 
        \multicolumn{1}{c}{mean} & \multicolumn{1}{c}{std} & \multicolumn{1}{c|}{stat} & 
        \multicolumn{1}{c}{mean} & \multicolumn{1}{c}{std} & \multicolumn{1}{c}{stat} \\ \hline
\multirow{6}{*}{\rotatebox[origin=c]{90}{uncorr-100}}  & \multirow{6}{*}{\rotatebox[origin=c]{90}{2407}} & \multirow{2}{*}{0.1} & 25 &	 10987.3004 &	 75.7683 &	 $2^{-}$ &	\textbf{11042.7989} &	 0.0000 &	 $1^{+}  3^{+} 4^{+} 5^{+}$ &	 10965.6291 &	 88.0780 &	 $2^{-}$ &	 10956.9290 &	 50.1020 &	 $2^{-}$ &	 10965.9887 &	 51.9898 &	 $2^{-}$ \\	
&  &  & 50 &	 10778.0078 &	 60.1974 &	 $2^{-}$ &	\textbf{10821.5978} &	 0.0000 &	 $1^{+}  3^{+} 4^{+} 5^{+}$ &	 10753.4968 &	 51.9386 &	 $2^{-}$ &	 10728.1263 &	 79.0134 &	 $2^{-}$ &	 10749.4108 &	 48.9647 &	 $2^{-}$ \\	
&  & \multirow{2}{*}{0.01}  & 25 &	 10896.5878 &	 74.5928 &	 $2^{-}$ &	\textbf{10951.1744} &	 0.0000 &	 $1^{+}  3^{+} 4^{+} 5^{+}$ &	 10875.7430 &	 86.4446 &	 $2^{-}$ &	 10867.4019 &	 48.9713 &	 $2^{-}$ &	 10876.2792 &	 50.7360 &	 $2^{-}$ \\	
&  &  & 50 &	 10596.7649 &	 57.6057 &	 $2^{-}$ &	\textbf{10638.3488} &	 0.0000 &	 $1^{+}  3^{+} 4^{+} 5^{+}$ &	 10573.7130 &	 49.3723 &	 $2^{-}$ &	 10549.2659 &	 76.1316 &	 $2^{-}$ &	 10569.9917 &	 46.4640 &	 $2^{-}$ \\	
&  & \multirow{2}{*}{0.001}  &  25 &	 10826.9815 &	 73.6938 &	 $2^{-}$ &	\textbf{10880.8684} &	 0.0000 &	 $1^{+}  3^{+} 4^{+} 5^{+}$ &	 10806.7709 &	 85.1930 &	 $2^{-}$ &	 10798.7053 &	 48.1051 &	 $2^{-}$ &	 10807.4426 &	 49.7746 &	 $2^{-}$ \\	
&  &  & 50 &	 10457.6924 &	 55.6226 &	 $2^{-}$ &	\textbf{10497.7369} &	 0.0000 &	 $1^{+}  3^{+} 4^{+} 5^{+}$ &	 10435.7601 &	 47.4116 &	 $2^{-}$ &	 10412.0216 &	 73.9290 &	 $2^{-}$ &	 10432.3187 &	 44.5489 &	 $2^{-}$ \\	\hline
\multirow{6}{*}{\rotatebox[origin=c]{90}{strong-100}}  & \multirow{6}{*}{\rotatebox[origin=c]{90}{4187}} & \multirow{2}{*}{0.1} & 25 &	 8463.1467 &	 127.6172 &	 $2^{-}$ &	\textbf{8561.1573} &	 85.4504 &	 $1^{+}  3^{+} 4^{+} 5^{+}$ &	 8481.7278 &	 123.3848 &	 $2^{-}$ &	 8456.8099 &	 146.8670 &	 $2^{-}$ &	 8456.3390 &	 122.3538 &	 $2^{-}$ \\	
&  &  & 50 &	 8278.6327 &	 90.2553 &	 $2^{-}$ &	\textbf{8332.4692} &	 64.4783 &	 $1^{+}  3^{+}  5^{+}$ &	 8240.4850 &	 110.0787 &	 $2^{-}$ &	 8276.2257 &	 119.9424 &	 ~  &	 8233.2780 &	 111.0076 &	 $2^{-}$ \\	
&  & \multirow{2}{*}{0.01}  & 25 &	 8369.3409 &	 122.2698 &	 $2^{-}$ &	\textbf{8464.4621} &	 81.5096 &	 $1^{+}  3^{+} 4^{+} 5^{+}$ &	 8387.9199 &	 118.6225 &	 $2^{-}$ &	 8364.1958 &	 141.2465 &	 $2^{-}$ &	 8363.9441 &	 117.6448 &	 $2^{-}$ \\	
&  &  & 50 &	 8086.8101 &	 82.3211 &	 $2^{-}$ &	\textbf{8139.0049} &	 58.6173 &	 $1^{+}  3^{+}  5^{+}$ &	 8054.9801 &	 100.8829 &	 $2^{-}$ &	 8087.7692 &	 109.6779 &	 ~  &	 8048.4883 &	 101.6750 &	 $2^{-}$ \\	
&  & \multirow{2}{*}{0.001}  &  25 &	 8297.3868 &	 118.1961 &	 $2^{-}$ &	\textbf{8390.2653} &	 78.4972 &	 $1^{+}  3^{+} 4^{+} 5^{+}$ &	 8315.9386 &	 114.9756 &	 $2^{-}$ &	 8293.1306 &	 136.9396 &	 $2^{-}$ &	 8293.0470 &	 114.0400 &	 $2^{-}$ \\	
&  &  & 50 &	 7939.6195 &	 76.3766 &	 $2^{-}$ &	\textbf{7990.5545} &	 54.1630 &	 $1^{+}  3^{+}  5^{+}$ &	 7912.6372 &	 93.8879 &	 $2^{-}$ &	 7943.1615 &	 101.8268 &	 ~  &	 7906.6941 &	 94.5672 &	 $2^{-}$ \\	\hline
\multirow{6}{*}{\rotatebox[origin=c]{90}{uncorr-300}}  & \multirow{6}{*}{\rotatebox[origin=c]{90}{6853}} & \multirow{2}{*}{0.1} & 25 &	 33863.1544 &	 205.0835 &	 $2^{-}$ &	\textbf{34212.8177} &	 146.9244 &	 $1^{+}  3^{+} 4^{+} 5^{+}$ &	 33610.2955 &	 532.9802 &	 $2^{-}$ &	 33600.5469 &	 554.0346 &	 $2^{-}$ &	 33545.0211 &	 487.7131 &	 $2^{-}$ \\	
&  &  & 50 &	 33428.2570 &	 259.2905 &	 $2^{-}$ &	\textbf{33821.1723} &	 157.7349 &	 $1^{+}  3^{+} 4^{+} 5^{+}$ &	 33276.1084 &	 510.0035 &	 $2^{-}$ &	 33143.0191 &	 448.4365 &	 $2^{-}$ &	 33177.8089 &	 480.9303 &	 $2^{-}$ \\	
&  & \multirow{2}{*}{0.01}  & 25 &	 33708.5096 &	 203.9304 &	 $2^{-}$ &	\textbf{34055.7967} &	 145.6324 &	 $1^{+}  3^{+} 4^{+} 5^{+}$ &	 33457.5385 &	 530.1364 &	 $2^{-}$ &	 33448.1219 &	 551.0166 &	 $2^{-}$ &	 33392.9168 &	 484.9026 &	 $2^{-}$ \\	
&  &  & 50 &	 33119.8295 &	 255.9406 &	 $2^{-}$ &	\textbf{33507.4493} &	 154.5170 &	 $1^{+}  3^{+} 4^{+} 5^{+}$ &	 32970.6017 &	 504.5017 &	 $2^{-}$ &	 32839.2289 &	 443.7443 &	 $2^{-}$ &	 32873.6003 &	 475.3176 &	 $2^{-}$ \\	
&  & \multirow{2}{*}{0.001}  &  25 &	 33589.8465 &	 203.0502 &	 $2^{-}$ &	\textbf{33935.3102} &	 144.6467 &	 $1^{+}  3^{+} 4^{+} 5^{+}$ &	 33340.3278 &	 527.9571 &	 $2^{-}$ &	 33331.1621 &	 548.7016 &	 $2^{-}$ &	 33276.2031 &	 482.7469 &	 $2^{-}$ \\	
&  &  & 50 &	 32883.1649 &	 253.3855 &	 $2^{-}$ &	\textbf{33266.7212} &	 152.0660 &	 $1^{+}  3^{+} 4^{+} 5^{+}$ &	 32736.1783 &	 500.2836 &	 $2^{-}$ &	 32606.1226 &	 440.1485 &	 $2^{-}$ &	 32640.1729 &	 471.0147 &	 $2^{-}$ \\	\hline
\multirow{6}{*}{\rotatebox[origin=c]{90}{strong-300}}  & \multirow{6}{*}{\rotatebox[origin=c]{90}{13821}} & \multirow{2}{*}{0.1} & 25 &	 23744.0892 &	 430.4747 &	 $2^{-} 3^{-}  5^{-}$ &	\textbf{24300.5479} &	 214.9589 &	 $1^{+}  3^{+} 4^{+}$ &	 24033.7421 &	 324.1660 &	 $1^{+} 2^{-}$ &	 23921.4833 &	 341.5510 &	 $2^{-}$ &	 24109.9465 &	 230.6676 &	 $1^{+}$ \\	
&  &  & 50 &	 23462.9510 &	 329.9513 &	 $2^{-} 3^{-}  5^{-}$ &	\textbf{23997.1125} &	 215.9394 &	 $1^{+}  3^{+} 4^{+} 5^{+}$ &	 23735.7246 &	 288.1238 &	 $1^{+} 2^{-}$ &	 23566.2352 &	 299.2535 &	 $2^{-}$ &	 23767.0263 &	 219.6354 &	 $1^{+} 2^{-}$ \\	
&  & \multirow{2}{*}{0.01}  & 25 &	 23603.7492 &	 424.5461 &	 $2^{-} 3^{-}  5^{-}$ &	\textbf{24152.7138} &	 210.6726 &	 $1^{+}  3^{+} 4^{+}$ &	 23892.9269 &	 317.9729 &	 $1^{+} 2^{-}$ &	 23782.5354 &	 335.8005 &	 $2^{-}$ &	 23967.9043 &	 226.0936 &	 $1^{+}$ \\	
&  &  & 50 &	 23181.2241 &	 317.9412 &	 $2^{-} 3^{-}  5^{-}$ &	\textbf{23700.6927} &	 206.3054 &	 $1^{+}  3^{+} 4^{+} 5^{+}$ &	 23452.1877 &	 276.3622 &	 $1^{+} 2^{-}$ &	 23289.4860 &	 288.4378 &	 $2^{-}$ &	 23482.9419 &	 210.5250 &	 $1^{+} 2^{-}$ \\	
&  & \multirow{2}{*}{0.001}  &  25 &	 23496.0973 &	 420.0184 &	 $2^{-} 3^{-}  5^{-}$ &	\textbf{24039.3181} &	 207.2892 &	 $1^{+}  3^{+} 4^{+}$ &	 23784.9190 &	 313.2317 &	 $1^{+} 2^{-}$ &	 23675.9269 &	 331.3992 &	 $2^{-}$ &	 23858.9115 &	 222.5879 &	 $1^{+}$ \\	
&  &  & 50 &	 22965.0474 &	 308.7926 &	 $2^{-} 3^{-}  5^{-}$ &	\textbf{23473.2418} &	 198.9571 &	 $1^{+}  3^{+} 4^{+} 5^{+}$ &	 23234.6394 &	 267.3758 &	 $1^{+} 2^{-}$ &	 23077.1290 &	 280.1499 &	 $2^{-}$ &	 23264.9563 &	 203.5545 &	 $1^{+} 2^{-}$ \\	\hline
\multirow{6}{*}{\rotatebox[origin=c]{90}{uncorr-500}}  & \multirow{6}{*}{\rotatebox[origin=c]{90}{11243}} & \multirow{2}{*}{0.1} & 25 &	 56985.6975 &	 747.8288 &	 $2^{-} 3^{+} 4^{+}$ &	\textbf{58337.8820} &	 310.8352 &	 $1^{+}  3^{+} 4^{+} 5^{+}$ &	 55761.8933 &	 1247.6913 &	 $1^{-} 2^{-}$ &	 55780.3693 &	 1406.1906 &	 $1^{-} 2^{-}$ &	 55948.9616 &	 1285.1401 &	 $2^{-}$ \\	
&  &  & 50 &	 56509.1689 &	 856.1817 &	 $2^{-} 3^{+} 4^{+} 5^{+}$ &	\textbf{57934.0703} &	 312.7435 &	 $1^{+}  3^{+} 4^{+} 5^{+}$ &	 55386.3718 &	 1041.2515 &	 $1^{-} 2^{-}$ &	 54806.2623 &	 1219.9708 &	 $1^{-} 2^{-}$ &	 55489.9899 &	 1273.9787 &	 $1^{-} 2^{-}$ \\	
&  & \multirow{2}{*}{0.01}  & 25 &	 56790.6294 &	 745.4714 &	 $2^{-} 3^{+} 4^{+}$ &	\textbf{58137.6851} &	 309.2459 &	 $1^{+}  3^{+} 4^{+} 5^{+}$ &	 55572.3326 &	 1243.5564 &	 $1^{-} 2^{-}$ &	 55590.6468 &	 1401.5183 &	 $1^{-} 2^{-}$ &	 55758.8493 &	 1280.5162 &	 $2^{-}$ \\	
&  &  & 50 &	 56119.2292 &	 849.6132 &	 $2^{-} 3^{+} 4^{+} 5^{+}$ &	\textbf{57533.3999} &	 309.3150 &	 $1^{+}  3^{+} 4^{+} 5^{+}$ &	 55007.2442 &	 1033.3396 &	 $1^{-} 2^{-}$ &	 54431.0658 &	 1211.8786 &	 $1^{-} 2^{-}$ &	 55109.7653 &	 1264.7366 &	 $1^{-} 2^{-}$ \\	
&  & \multirow{2}{*}{0.001}  &  25 &	 56640.9485 &	 743.6632 &	 $2^{-} 3^{+} 4^{+}$ &	\textbf{57984.0686} &	 308.0287 &	 $1^{+}  3^{+} 4^{+} 5^{+}$ &	 55426.8776 &	 1240.3841 &	 $1^{-} 2^{-}$ &	 55445.0676 &	 1397.9335 &	 $1^{-} 2^{-}$ &	 55612.9710 &	 1276.9688 &	 $2^{-}$ \\	
&  &  & 50 &	 55820.0179 &	 844.5763 &	 $2^{-} 3^{+} 4^{+} 5^{+}$ &	\textbf{57226.0199} &	 306.6676 &	 $1^{+}  3^{+} 4^{+} 5^{+}$ &	 54716.3294 &	 1027.2713 &	 $1^{-} 2^{-}$ &	 54143.1674 &	 1205.6714 &	 $1^{-} 2^{-}$ &	 54818.0086 &	 1257.6477 &	 $1^{-} 2^{-}$ \\	\hline
\multirow{6}{*}{\rotatebox[origin=c]{90}{strong-500}}  & \multirow{6}{*}{\rotatebox[origin=c]{90}{22223}} & \multirow{2}{*}{0.1} & 25 &	 38739.7417 &	 688.5044 &	 $2^{-} 3^{-} 4^{-} 5^{-}$ &	\textbf{40301.3374} &	 447.3446 &	 $1^{+}  3^{+} 4^{+} 5^{+}$ &	 39707.3872 &	 412.9829 &	 $1^{+} 2^{-}$ &	 39669.4439 &	 422.3664 &	 $1^{+} 2^{-}$ &	 39685.3277 &	 389.7418 &	 $1^{+} 2^{-}$ \\	
&  &  & 50 &	 38280.9153 &	 613.7020 &	 $2^{-} 3^{-} 4^{-} 5^{-}$ &	\textbf{39897.8123} &	 344.8981 &	 $1^{+}  3^{+} 4^{+} 5^{+}$ &	 39273.3775 &	 598.3347 &	 $1^{+} 2^{-}$ &	 39315.7031 &	 477.1865 &	 $1^{+} 2^{-}$ &	 39247.8886 &	 377.7601 &	 $1^{+} 2^{-}$ \\	
&  & \multirow{2}{*}{0.01}  & 25 &	 38563.3178 &	 680.7776 &	 $2^{-} 3^{-} 4^{-} 5^{-}$ &	\textbf{40109.3511} &	 442.0526 &	 $1^{+}  3^{+} 4^{+} 5^{+}$ &	 39525.9425 &	 407.4683 &	 $1^{+} 2^{-}$ &	 39488.2712 &	 416.7806 &	 $1^{+} 2^{-}$ &	 39504.1345 &	 384.7767 &	 $1^{+} 2^{-}$ \\	
&  &  & 50 &	 37930.8422 &	 599.1733 &	 $2^{-} 3^{-} 4^{-} 5^{-}$ &	\textbf{39510.9695} &	 336.7709 &	 $1^{+}  3^{+} 4^{+} 5^{+}$ &	 38909.4678 &	 582.7780 &	 $1^{+} 2^{-}$ &	 38950.9755 &	 464.0994 &	 $1^{+} 2^{-}$ &	 38885.5815 &	 367.8049 &	 $1^{+} 2^{-}$ \\	
&  & \multirow{2}{*}{0.001}  &  25 &	 38427.9430 &	 674.8592 &	 $2^{-} 3^{-} 4^{-} 5^{-}$ &	\textbf{39962.0501} &	 437.9920 &	 $1^{+}  3^{+} 4^{+} 5^{+}$ &	 39386.7151 &	 403.2421 &	 $1^{+} 2^{-}$ &	 39349.2525 &	 412.4983 &	 $1^{+} 2^{-}$ &	 39365.1001 &	 380.9688 &	 $1^{+} 2^{-}$ \\	
&  &  & 50 &	 37662.2216 &	 588.0761 &	 $2^{-} 3^{-} 4^{-} 5^{-}$ &	\textbf{39214.1346} &	 330.5707 &	 $1^{+}  3^{+} 4^{+} 5^{+}$ &	 38630.2301 &	 570.8539 &	 $1^{+} 2^{-}$ &	 38671.1102 &	 454.0722 &	 $1^{+} 2^{-}$ &	 38607.6080 &	 360.1434 &	 $1^{+} 2^{-}$ \\	\bottomrule

\end{tabular}
\end{table*}
\begin{table*}[!t]
    \centering
    \scriptsize
    \caption{Results for different dispersion using Chebyshev's inequality}
    
    \label{tab:rand_del}
    \setlength{\tabcolsep}{2pt}
    \begin{tabular}{c|r|r|r r c|r r c|r r c|r r c}
    \toprule
    \multicolumn{1}{c|}{\multirow{2}{*}{Instance}} & \multicolumn{1}{c|}{\multirow{2}{*}{B}} & \multicolumn{1}{c|}{\multirow{2}{*}{$\alpha$}} &
        \multicolumn{3}{c|}{GSEMO with g(x) (1)}  & 
        \multicolumn{3}{c|}{GSEMO+Filtering with g(x) (2)}  & 
        \multicolumn{3}{c|}{NSGA-II with g(x) (3)} &
        \multicolumn{3}{c}{NSGA-II with g$^{\prime}$(x) (4)} \\
        
        ~ & ~ & ~ & \multicolumn{1}{c}{mean} & \multicolumn{1}{c}{std} & \multicolumn{1}{c|}{stat} & 
        \multicolumn{1}{c}{mean} & \multicolumn{1}{c}{std} & \multicolumn{1}{c|}{stat} & 
        \multicolumn{1}{c}{mean} & \multicolumn{1}{c}{std} & \multicolumn{1}{c|}{stat} &
        \multicolumn{1}{c}{mean} & \multicolumn{1}{c}{std} & \multicolumn{1}{c}{stat} \\ \hline

\multirow{3}{*}{uncorr-100} & \multirow{3}{*}{2407} & 0.1 &	 8546.7015 &	 322.7956 &	 ~  &	 8562.9620 &	 323.1475 &	 ~  &	\textbf{8565.8942} &	 324.9223 &	 ~  &	 8564.2549 &	 323.8494 &	 ~  \\	
& & 0.01 &	\textbf{4682.2710} &	 799.1875 &	 ~  &	 4675.9295 &	 795.5029 &	 ~  &	 4674.8735 &	 797.5334 &	 ~  &	 4677.1350 &	 799.3174 &	 ~  \\	
& &  0.001 &	\textbf{1967.4965} &	 880.0104 &	 ~  &	 1962.3962 &	 886.0726 &	 ~  &	 1919.2445 &	 917.6798 &	 ~  &	 1953.6975 &	 894.4411 &	 ~  \\	\hline
\multirow{3}{*}{strong-100} & \multirow{3}{*}{4187} & 0.1 &	 7100.1631 &	 245.6990 &	 ~  &	 7099.8604 &	 250.2445 &	 ~  &	 7122.7698 &	 249.2182 &	 ~  &	\textbf{7123.3386} &	 250.5080 &	 ~  \\	
& & 0.01 &	 5315.4258 &	 326.6578 &	 ~  &	 5308.5248 &	 343.0757 &	 ~  &	 5329.4856 &	 326.9290 &	 ~  &	\textbf{5333.3263} &	 325.5847 &	 ~  \\	
& & 0.001 &	\textbf{3746.9429} &	 732.1072 &	 ~  &	 3746.1213 &	 731.9397 &	 ~  &	 3734.2093 &	 731.9080 &	 ~  &	 3743.5859 &	 731.6689 &	 ~  \\	\hline
\multirow{3}{*}{uncorr-300} & \multirow{3}{*}{6853} & 0.1 &	 29089.9230 &	 479.0749 &	 $ 2^{-} 3^{-} 4^{-}$ &	 29580.8035 &	 355.6388 &	 $1^{+}   $ &	\textbf{29735.0392} &	 358.7213 &	 $1^{+}   $ &	 29676.9763 &	 388.6763 &	 $1^{+}   $ \\	
& & 0.01 &	 19396.8836 &	 1128.3967 &	 ~  &	 19431.8036 &	 1126.0478 &	 ~  &	\textbf{19589.0418} &	 1131.0819 &	 ~  &	 19580.3092 &	 1131.0342 &	 ~  \\	
& & 0.001 &	 8910.0087 &	 1370.4531 &	 ~  &	 8850.8254 &	 1381.8863 &	 ~  &	 8899.5025 &	 1378.4832 &	 ~  &	\textbf{8971.3050} &	 1383.6696 &	 ~  \\	\hline
\multirow{3}{*}{strong-300} & \multirow{3}{*}{13821}& 0.1 &	 21789.9295 &	 335.7672 &	 $ 2^{-} 3^{-} 4^{-}$ &	 22171.9138 &	 358.9991 &	 $1^{+}   $ &	\textbf{22345.3397} &	 309.3574 &	 $1^{+}   $ &	 22297.4300 &	 307.0689 &	 $1^{+}   $ \\	
& & 0.01 &	 18172.8378 &	 560.3248 &	 ~  &	 18195.2974 &	 615.1645 &	 ~  &	 18338.0359 &	 588.6787 &	 ~  &	\textbf{18342.4421} &	 576.7652 &	 ~  \\	
& & 0.001 &	 14629.7944 &	 809.3377 &	 ~  &	 14617.5558 &	 794.6289 &	 ~  &	 14643.0814 &	 808.6424 &	 ~  &	\textbf{14667.4349} &	 812.9751 &	 ~  \\	\hline
\multirow{3}{*}{uncorr-500} & \multirow{3}{*}{11243} & 0.1 &	 50266.4398 &	 709.0211 &	 $ 2^{-} 3^{-} 4^{-}$ &	\textbf{52494.0984} &	 556.8082 &	 $1^{+}   $ &	 52468.0194 &	 532.9634 &	 $1^{+}   $ &	 52149.3408 &	 700.4027 &	 $1^{+}   $ \\	
& & 0.01 &	 37753.4240 &	 1566.1944 &	 ~  &	 38510.4882 &	 1564.4777 &	 ~  &	\textbf{38746.1887} &	 1539.8167 &	 ~  &	 38686.7230 &	 1555.8618 &	 ~  \\	
& & 0.001 &	 18969.0800 &	 2144.1783 &	 ~  &	 18880.7433 &	 2144.3506 &	 ~  &	 19153.0886 &	 2137.5579 &	 ~  &	\textbf{19190.4696} &	 2134.7447 &	 ~  \\	\hline
\multirow{3}{*}{strong-500} & \multirow{3}{*}{22223} & 0.1 &	 35919.5415 &	 631.8822 &	 $ 2^{-} 3^{-} 4^{-}$ &	\textbf{37833.8138} &	 352.1352 &	 $1^{+}   $ &	 37832.1320 &	 332.8651 &	 $1^{+}   $ &	 37690.0363 &	 317.8082 &	 $1^{+}   $ \\	
& & 0.01 &	 30977.9111 &	 679.1163 &	 $ 2^{-} 3^{-} 4^{-}$ &	 31554.3119 &	 682.8664 &	 $1^{+}   $ &	\textbf{31822.0362} &	 649.7576 &	 $1^{+}   $ &	 31805.6899 &	 637.4521 &	 $1^{+}   $ \\	
& & 0.001 &	 25041.2112 &	 721.5121 &	 ~  &	 25018.0126 &	 704.1720 &	 ~  &	 25131.2311 &	 723.9024 &	 ~  &	\textbf{25193.7178} &	 741.8816 &	 ~  \\	\bottomrule 

\end{tabular}
\end{table*}

Table \ref{tab:cheb-same-delta} and \ref{tab:cher-same-delta} present the results for the benchmarks with fixed uncertainty levels profits of elements. According to the mean values GSEMO with the filtering method outperforms other methods in most of the settings in both tables. The statistical comparisons give more insights into the performance of the methods when applied to each instance. Results for uncorr-100 in Table \ref{tab:cheb-same-delta} show that GSEMO+Filtering performs the better than other four methods, and GSEMO performs better than NSGA-II with $g^{\prime\prime}(x)$. For smaller confidence levels $\alpha=0.001$, NSGA-II with $g(x)$ outperforms that with $g^{\prime\prime}(x)$. Strong-100 instance gets very different results for $\alpha=0.001$ and $\delta=50$, than other cases of the same instance. There, GSEMO with $g(x)$ and NSGA-II with $g(x)$ and $g^\prime(x)$ perform well and GSEMO+Filtering gives the lowest result. However, the second method performs well in other $\alpha$ and $\delta$ settings and NSGA-II with $g(x)$ and $g^\prime(x)$ also perform similarly in certain settings.

\sloppy 

For all settings of uncorr-300, GSEMO+Filtering outperforms the other four methods while performing similarly to each other. For strong-300 instance, GSEMO gives lower results than GSEMO+Filtering and NSGA-II with $g(x)$ and $g^{\prime\prime}(x)$. Also, NSGA-II with $g(x)$ produces results as good as GSEMO+Filtering for $\alpha=0.001$. For uncorr-500, GSEMO+Filtering gives the best results and GSEMO outperforms most of the NSGA-II results. However, when using $g^{\prime\prime}(x)$ with NSGA-II on this instance, results show equal performance except for $\alpha=\{0.1, 0.01\}$ confidence levels when considering uncertainty level $\delta=50$. Experiments on strong-500 also get the best results from GSEMO+Filtering. However, GSEMO is outperformed by other methods for all settings considered for this instance. In comparison, the NSGA-II gives the second best results across settings when $\delta=50$ and $\alpha=0.001$ NSGA-II methods also perform as well as GSEMO+Filtering.

Table \ref{tab:cher-same-delta} gives the results from the experiments that use $\hat{p}_{Cheb}$ to estimate the profit values of solutions. uncorr-100, strong-100, and uncorr-300 results are highest when using GSEMO+Filtering. It outperforms all the other methods except NSGA-II with $g^\prime(x)$ on strong-100 instance for uncertainty level $\delta=50$. Experiments on strong-300 instance show that NSGA-II with $g^{\prime\prime}(x)$ performs equally as the GSEMO+Filtering when considering a lower uncertainty level (25) for the knapsack items. GSEMO and NSGA-II with $g^{\prime}(x)$ methods give the lowest results for this instance. On the contrary, GSEMO performs better than most NSGA-II methods for uncorr-500. When using $g^{\prime\prime}(x)$, NSGA-II performs equally as GSEMO for lower uncertainty level value 25. For all $\alpha$ and $\delta$ values, NSGA-II methods are outperformed by GSEMO+Filtering in the experiments on strong-500 in Table \ref{tab:cher-same-delta}. However, NSGA-II methods perform better than GSEMO on that benchmark. 

GSEMO+Filtering performs significantly well when applied to benchmarks with the same dispersion for profits. The filtering method allows the interim populations to contain more solutions that yield a valid confidence level interval. Therefore, improving upon these solutions eventually gives better results in the final outcome. Considering NSGA-II results, $g(x)$ and $g^{\prime}(x)$ tends to produce better results than $g^{\prime\prime}(x)$. This can be due to the fact the crowding distance assignment is better when considering the variance or the standard deviation of solutions' profits. 

We can compare the results for benchmarks with the same dispersion for profits (given in Table \ref{tab:cheb-same-delta} and \ref{tab:cher-same-delta}) with the previous work in \cite{Aneta_ppsn_22} as it also considers the same benchmarks and similar experimental setup. The experimental settings are the same in both works except for the number of fitness evaluations the algorithms consider. For each $\alpha$ and $\delta$ value, methods in \cite{Aneta_ppsn_22} run for one million fitness evaluations. On the contrary, we run the multi-objective methods on benchmarks for each $\delta$ value for 10 million fitness evaluations which yield results for all $\alpha$ values from the same algorithm output. The results show that for Chebyshev results 14 out of the 36 settings, the highest mean profit values given in Table \ref{tab:cheb-same-delta} outperform all the three methods used in \cite{Aneta_ppsn_22} and for Hoeffding results in 15 out of the 36 settings gets better profits according to Table \ref{tab:cher-same-delta}. Generally, most of these cases are from experiments on uncorrelated benchmarks. For the cases where the new methods perform better on bounded strongly correlated instances, it is for higher uncertainty level $\delta=50$ and lower confidence values like $\alpha=\{0.01,0.01\}$.

Table \ref{tab:rand_del} presents results for benchmarks with different dispersion for profits of elements. The highest mean values reported for each case show, for smaller instances, GSEMO and NSGA-II give the highest mean profits, and for instances with 300 or 500 elements, GSEMO+Filtering with $g^\prime(x)$ and NSGA-II give the highest mean profit. Although the highest mean values can be identified from different methods, the results are similar in most of the settings. Based on the statistical comparisons, we can see that smaller instances: strong-100 and uncorr-100, do not significantly differ between each method's results. Results from other instances show that for $\alpha=0.1$ GSEMO+Filtering and NSGA-II methods outperform the GSEMO. In addition, strong-500 instance shows better results from GSEMO+Filtering with $g^\prime(x)$ and NSGA-II with both fitness functions for $\alpha=0.01$. However, in other settings with $\alpha=\{0.01,0.001\}$, all methods show similar results for the instances with 300 and 500 items.

Compared to the benchmarks with the same dispersion for profits, the ones with different dispersion can give higher variance for the profit of some elements. Therefore, it is crucial to have a good spread of the solutions in the final population as more solutions tend to give negative profit values when considering certain confidence levels. NSGA-II using crowding distance based sorting appears to achieve this when use with selected objectives in the fitness evaluations. In comparison, GSEMO+Filtering is also able to achieve similarly good results by ignoring the solutions without a valid $\alpha$ interval and eventually improving the final population. 

\section{Conclusion}
\label{sec:conclusion}
This paper explores multi-objective evolutionary approaches to solve the profit chance constrained KP. We introduce fitness evaluations for EAs to cater for this problem. These fitness functions can evaluate the solutions irrespective of the required confidence level of the solutions. Therefore, the outcome of EAs gives us a population that includes solutions giving the best profit value for different confidence levels. So it is unnecessary to decide on the required confidence level before executing the algorithms, and no need to have multiple executions to investigate solutions for different confidence levels. After considering the available solutions and risks associated with their profits, it is possible to make more informed decisions on what solutions to implement for the problem instance. Furthermore, we introduce a filtering method, which is applied at regular intervals of fitness evaluations. It keeps only the solutions with a valid $\alpha$ interval in the interim populations, enabling the new offspring solutions in the next generations to improve upon these solutions. The performance of these methods is evident in the experimental investigations.

\section*{Acknowledgements}
This work has been supported by the Australian Research Council (ARC) through grant FT200100536, and by the South Australian Government through the Research Consortium "Unlocking Complex Resources through Lean Processing". This work was also supported with supercomputing resources provided by the Phoenix HPC service at the University of Adelaide.


\bibliographystyle{ACM-Reference-Format}
\bibliography{main}



\end{document}